\newcolumntype{C}[1]{>{\centering\let\newline\\\arraybackslash\hspace{0pt}}m{#1}}
\newcommand\Mark[1]{\textsuperscript#1}
\begin{document}
\pagestyle{headings}
\mainmatter

\title{Reliable Attribute-Based Object Recognition Using High Predictive Value Classifiers} 

\titlerunning{Reliable Attribute-Based Object Recognition}

\authorrunning{W. Luan, Y. Yang, C. Ferm\"uller, J. S. Baras}

\author{Wentao Luan\Mark{1}, Yezhou Yang\Mark{2}, Cornelia Ferm\"uller\Mark{2}, John S. Baras\Mark{1}
}
\institute{\Mark{1} Institute for Systems Research, University of Maryland, College Park, USA \\
\Mark{2} Computer Vision Lab, University of Maryland, College Park, USA\\
\email{wluan@umd.edu, yzyang@cs.umd.edu, fer@umiacs.umd.edu,  baras@umd.edu}}

\maketitle

\begin{abstract}
We consider the problem of object recognition in 3D using an ensemble of attribute-based classifiers. We propose two new concepts to improve classification in practical situations, and show their implementation in an approach implemented for  recognition  from point-cloud data. First, the viewing conditions can have a strong influence on classification performance. We study the impact of the distance between the camera and the object and propose an approach to fusing multiple attribute classifiers, which incorporates distance into the decision making. Second, lack of representative training samples often makes it difficult to learn the optimal threshold value for best positive and negative detection rate. We address this issue, by setting in our attribute classifiers instead of just one threshold value, two threshold values to distinguish a positive, a  negative and an uncertainty class, and we prove the theoretical correctness of this approach. Empirical studies demonstrate the effectiveness and  feasibility of the proposed concepts.
%
\end{abstract}

\section{Introduction}

Reliable object  recognition from 3D data is a fundamental task for active agents and a prerequisite for many cognitive robotic applications, such as assistive robotics or smart manufacturing. The viewing conditions, such as the distance of the sensor to the  object, the illumination, and the viewing angle, have a strong influence on the accuracy of estimating simple as well as complex features, and thus on the accuracy of the classifiers. A common approach to tackle the problem of robust recognition is to employ attribute based classifiers, and combine the individual attribute estimates by fusing their information  \cite{10VSCH},\cite{Model_blend},\cite{10PFMAOR}.

This work introduces two concepts to robustify the recognition by addressing common issues in the processing of 3D data, namely the problem of classifier dependence on viewing conditions, and the problem of insufficient training data. 

We first study the influence of distance between the camera and the object on the performance of attribute classifiers. Unlike 2D image processing techniques, which usually scale the image to address the impact of distance, depth based object recognition procedures using input from 3D cameras tend to  be affected by distance-dependent noise, and this effect cannot easily be overcome \cite{Noise}.

We propose an approach that addresses effects of distance on object recognition. It considers the response of individual attribute classifiers' depending on distance and incorporates it into the decision making. Though, the main factor studied here is distance, our mathematical approach is general, and can be applied to handle other factors affected by viewing conditions, such as  lighting, viewing angle, motion blur, etc.

To implement the attribute classifiers, usually, the standard threshold method is used to determine the boundary between positive and negative examples.  Using this threshold the existence of binary attributes is determined, which in turn controls the overall attribute space. However, there may not be enough training samples to accurately represent the underlying distributions, which makes it more difficult to learn one good classification threshold that minimizes the number of incorrect predictions (or maximizes the number of correct predictions).

Here we present an alternative approach which applies two thresholds with one aiming for a positive predictive value (PPV), giving high precision for positive classes, and the other aiming for a  negative predictive value (NPV), giving high precision for negative classes.  Each classifier can then have three types of output: ``positive'' when above  the high PPV threshold, ``negative'' when below the high NPV threshold and ``uncertain'' when falling into the interval between the two thresholds. Recognition decisions, when fusing the classifiers, are then made based on the  positive and  negative  results.
More observations thereby are needed for drawing a conclusion, but we consider this trade-off  affordable, since we assume that our active agent can  control the number of observations. 
Note that two threshold approaches have previously been used for the purpose of achieving results of high confidence, for example in  \cite{cost}, and  in probability ratio tests. 

The underlying intuition here is that it should be easier to obtain the high PPV and NPV thresholds than the classical Bayes threshold (minimizing the classification error), when the number of training samples is too small to  represent well the underlying distribution.  Fig.~\ref{threshold} illustrates the intuition. The top figure shows  the ground truth distributions (of the classification score)  of the  positive and negative class. The lower figure  depicts the estimated  distributions from training samples, which are biased  due to an insufficient amount of data. Furthermore, as our experiment revealed, even the ground truth distribution could be dependent on  viewing conditions, which makes it more challenging to learn a single optimal threshold.
In such a case, the system may end up with an inaccurate Bayes threshold. However, it is still possible to select
high  PPV (NPV) thresholds by setting these thresholds (at a safe distance) away  from the  negative (positive) distribution.



\begin{figure}
\centering
\begin{tabular}{cc}
\subfloat[]{
  \includegraphics[width=6.4cm, height=3cm]{./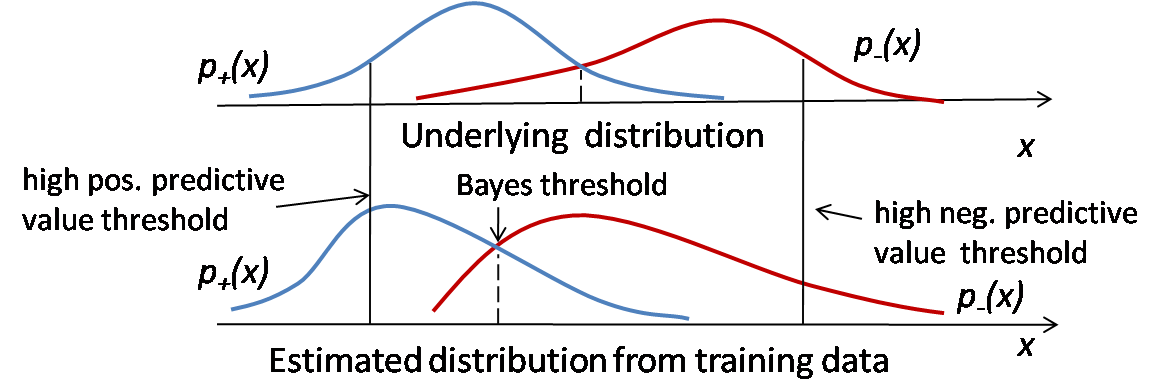}
  \label{threshold}
}
&
\subfloat[]{
   \includegraphics[width=3.5cm, height=3cm]{./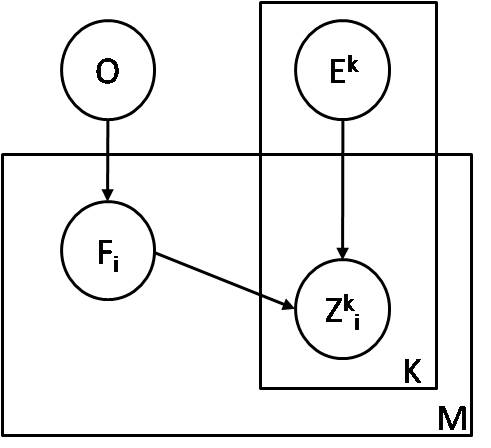}
   \label{relation}
}
\caption{(a): Illustration of common conditional probability density functions of the  positive and negative class. Top: ground truth distribution of the two classes; bottom: a possible distribution represented by the training data. Blue line: positive class; red line: negative class. dashed line: (estimated) Bayes threshold; solid line: high PPV and NPV thresholds. (b): The relationship of Objects ($O$), attributes ($F_i$), environmental variables ($E_k$) and  observations ($Z_{i}^k$) in our model.}
\end{tabular}
\end{figure}

For each basic (attribute) classifier, we can also define a reliable working region 
indicating a fair separation of the distributions of  positive and negative classes.
Hence our approach can actively select ``safe'' samples and discard ``unsafe'' ones in unreliable regions. We prove the asymptotic correctness of this approach in section~\ref{sec:asymp}.

Integrating both concepts, our complete approach to 3D object recognition works as follows: Offline  we learn  attribute classifiers, which are distance dependent. In practice, we discretize the space into $n$ distance intervals, and for each interval we learn classifiers with two thresholds. Also, we decide for each attribute  classifier a  reliable range of distance intervals. During the online process our active system takes RGBD images as it moves around the space. For each input image, it first decides the  distance interval in order to use the classifiers tuned to that interval. Classifier measurements from multiple images are then combined via maximum a posteriori probability (MAP) estimation.

Our work has three main contributions: 1) We put forward a practical framework  for fusing component classifiers' results by taking into account the distance, to accomplish reliable object recognition. 2) We prove our fusion framework's asymptotic correctness under certain assumptions on the attribute classifier and sufficient randomness of the input data. 3) The benefits of introducing simple attributes, which are more robust to viewing conditions, but less discriminative, are demonstrated in the experiment.

\section{Related Work}
Creating  practical object recognition systems that can work reliably under different viewing conditions, including varying distance, viewing angle, illumination and occlusions, is still a challenging problem in Computer Vision. Current single source based recognition methods have robustness to some extent: features like SIFT  \cite{SIFT} or the multifractal spectrum vector (MFS) \cite{MFS} in practice are invariant  to a certain degree to deformations of the scene  and viewpoint changes; geometric-based matching algorithms like BOR3D \cite{BOR3D} and  LINEMOD  \cite{LINEMOD} can recognize objects under large changes in  illumination, where color based algorithms tend to fail. But in complicated working environments, these systems have difficulties to achieve robust performance.

One way to deal with  variations in  viewing conditions is to incorporate different  sources of information (or cues) into the recognition process \cite{trust}. However, how to fuse the information from multiple sources, is still an open problem.

Early fusion methods have tried to build more descriptive features by combining features from sources like texture, color and depth before classification.
For example, Asako et al. builds voxelized shape and color histogram descriptors \cite{10VSCH} and classifies  objects using  SVM, while in \cite{14RT3D} information from color, depth, SIFT and shape distributions is described by histograms and objects are recognized using K-Nearest Neighbors.  

Besides early fusion, late fusion also has gained much attention and achieves good results.
Lutz at al. \cite{10PFMAOR} proposes a probabilistic fusion approach, called MOPED \cite{MOPED},  to combine a 3D model matcher, color histograms and feature based detection algorithm,
where  a quality factor, representing each method's discriminative capability, is integrated  in the final classification score.  
Meta information \cite{9RMAO} can also be added to create a new feature.
Ziang et al. \cite{Model_blend} blends classification scores from SIFT, shape, and color models with meta features providing information about each model's fitness from the input scene, which results in  high precision and recall on the Challenge and Willow datasets.
Considering influences due to viewing conditions, Ahmed \cite{AaptiveBayes} applies an AND/OR graph representation of different  features and updates a Bayes conditional probability table based on  measurements of the environment, such as intensity, distance and occlusions. However, these methods may suffer from inaccurate estimation of the conditional probabilities involved, because of insufficient training data.

In our work, we propose a framework for object recognition using multiple attribute classifiers, which considers both, effects due to viewing conditions and effects due to biased training data that systems face in practice. We implement our approach for an active agent that takes  advantage of multiple inputs at various distances.

\section{Assumptions and Formulation}

Before going into the details and introducing the notation, let us summarize this section. Section~\ref{sec:inference}
defines the data fusion of the different classification results through MAP estimation. Section~\ref{sec:req} proves that MAP estimation will classify correctly under certain requirements and assumptions. The  requirements are  restrictions on the  values of the PPV and NPV. The assumptions are that  our attribute classifiers perform correctly in the following sense: A ground truth positive value should be classified as positive or uncertain and a ground truth negative value should be classified as negative or uncertain. Finally section~\ref{sec:asymp} proves asymptotic correctness of MAP estimation. The estimation will converge, even if the classifiers don't perform correctly, under stronger requirements on the values of the PPV and NPV.

Let the  objects in the database be described by the set $\mathbb{O} = \{o_j\}$ ($j =1, 2, ..., |\mathbb{O}|$). Each object $o_j \in \mathbb{O}$ is  represented by a attribute vector $F^j = [f_{1j}, f_{2j}, ..., f_{Mj}]^T$, where $M$ is the number of attributes.  For the $i$-th attribute $F_i$, there is a corresponding component classifier to identify it. Denote its observation  as $Z_i^k$, where $i$ is  the index for  the  classifier and $k$ is the observation number. Here we consider binary attributes  $f_{ij} \in Range(F_i) = \{0,1\}$, $\forall i \in \{1, 2, ..., M\}$, and there are three possible values for the  observation : $Z^k_i = \{0, 1, u\}~ k \in {1, 2, ,,, K}$, where $u$ represents uncertainty for the case that the classification score falls in the interval between the high PPV and NPV thresholds.

The model also encodes effects due to viewing conditions (or environmental factors). In this work, we study the effect of  distance.
Thus,  $E$ is  the distance between the object and the camera. However, in future work, other environmental factors can be encoded as additional components. Fig.~\ref{relation} illustrates the relationship between objects, attributes, environmental factors and observations in a  graphical model.

In our notation $\mathbb{E}^K = \{E^1, E^2, ..., E^K\}$ represents the environmental variable at each observation, and $\mathbb{Z}_i^K = \{Z_i^1, Z_i^2, ..., Z_i^K\}$ is the set of observation results from the $i$-th classifier. We assume that an  observation of an attribute  $Z_i^k$ only depends on the   ground truth attribute variable $F_i$ and the environmental variable $E^k$. Because we assume that each object $o_j$ can be represented by an $M$-dimension attribute vector $F^j$, we have
$ \label{eq:F}
P(F|O = o_j) =
   \left\{
	\begin{array}{ll}
		1  & \mbox{if } F = F^j, \\
		0 & \mbox{o.w. }
	\end{array}
\right.
$

\subsection{Inference} \label{sec:inference}
 With $K$ observation results 
 $\mathbb{Z}^K = \{\mathbb{Z}_1^K, ..., \mathbb{Z}_M^K\}$ and corresponding environmental conditions  $\mathbb{E}^K$, we want to obtain  the posterior probability of the target object
being object $o_j \in \mathbb{O}$. i.e. $P(O = o_j | \mathbb{Z}^K, \mathbb{E}^K)$. Based on our graphical model we have:
\begin{align}\label{posterior}
\begin{split}
P(O = o_j | \mathbb{Z}^K, \mathbb{E}^K) &  =  \frac{P(O = o_j, \mathbb{Z}^K, \mathbb{E}^K)}{P(\mathbb{Z}^K, \mathbb{E}^K)} 
= \frac{P(O = o_j) P(\mathbb{Z}^K|F = F^j, \mathbb{E}^K)P(\mathbb{E}^K)}{P(\mathbb{Z}^K, \mathbb{E}^K)}  \\
&= \frac{P(\mathbb{E}^K)P(O = o_j)}{P(\mathbb{Z}^K, \mathbb{E}^K)} \prod_{k=1}^K\prod_{i=1}^{M}P(Z_i^k|F_i = f_{ij}, E^k) 
\\& 
=  \lambda P(O = o_j)\prod_{k=1}^K\prod_{i=1}^{M}\frac{P(F_i = f_{ij}|Z_i^k,E^k)}{P(F_i = f_{ij})}
\end{split}
\end{align}
where $\lambda \triangleq \frac{P(\mathbb{E}^K)\prod_{k=1}^K\prod_{i=1}^{M}P(Z_i^k, E^k)}
{P(\mathbb{Z}^K, \mathbb{E}^K)\prod_{k=1}^K\prod_{i=1}^{M}P(E^k)}$. Because
\begin{align}
P(F_i = f_{ij})
                = \sum_t P(O = o_t)P(F_i = f_{ij}|O=o_t) 
                = \sum_{\{t|f_{it} = f_{ij}\}}P(O=o_t)  
\end{align}
Finally, we have
\begin{align} \label{eq:ap}
\begin{split}
& P(O = o_j | \mathbb{Z}^K, \mathbb{E}^K)  = 
 \lambda P(O = o_j) \prod_{k=1}^K\prod_{i=1}^M\frac{P(F_i = f_{ij}|Z_i^k,E^k)}{\sum_{\{t|f_{it} = f_{ij}\}}P(O = o_t)}.
\end{split}
\end{align}

The recognition  $\mathbb{A}$ then is derived using MAP estimation as:
\begin{equation}
\mathbb{A} \triangleq \underset{o_j}{\operatorname{argmax}}~P(O = o_j| \mathbb{Z}^K, \mathbb{E}^K).
\end{equation}
In our framework, we use the high positive and negative predictive value observations ($Z=0,1$) to determine the posterior probability. 

We also take into account the influence of environmental factors. That is,  only  observations from a reliable working  region are adopted in the probability calculation. When the environmental factor is distance, the reliable working region is defined as a range of depth values where the  attribute classifier work reasonably well. We treat a range of distance values as a reliable working region for a classifier, if the detection rate for this range is larger than a certain threshold, and the PPV meets the system requirement.

This requirement for the component classifiers is achievable if the positive conditional probability density function of the classification score has a non-overlapping area with the negative one. Then we can tune the classifier's PPV threshold towards the positive direction (towards left in Fig.~\ref{threshold}) to achieve a high precision with a guarantee of minimum detection rate.


Formally speaking, our $P(F_i = f_{ij}|Z_i^k, E^k)$ is defined as:
\begin{align} \label{eq:CDP}
P(F_i = 1|Z_i^k, E^k) =
   \left\{
	\begin{array}{ll}
		p^+_i   \mbox{~~~~~~~~~~~~~if } e_k \in \mathbb{R}_i ~\&~ z_i^k = 1, \\
		1 - p^-_i  \mbox{~~~~~~~~if } e_k \in \mathbb{R}_i ~\&~ z_i^k = 0, \\
		\sum_{t|f_{it} = f_{ij}}P(O = o_t)  \mbox{~~~~~~~~o.w. }
	\end{array}
\right.
\end{align}
where $\mathbb{R}_i$ is the set of environmental values for which the $i$-th classifier can achieve a PPV $p_i^+$ with a detection rate lower bound. As before, $k$ denotes the $k$-th observation. If the above condition is not met,  either the recognition is done  in an unreliable region or the answer is uncertain. Now equation (\ref{eq:ap}) can be rewritten as:
\begin{align} \label{eq:ap2}
& P(O = o_j | \mathbb{Z}^K, \mathbb{E}^K) =  \lambda P(O = o_j) \prod_{k=1}^K\prod_{i \in \mathbb{I}^k}\frac{P(F_i = f_{ij}|Z_i^k,E^k)}{\sum_{\{t|f_{it} = f_{ij}\}}P(O = o_t)},
\end{align}
where $\mathbb{I}^k = \mathbb{I}^{k+} \cup \mathbb{I}^{k-} $ is the index set of recognized  attributes at the $k$-th observation with  $\mathbb{I}^{k+} = \{i|e^k \in \mathbb{R}_i ~\&~ z_i^k = 1\}$ and $\mathbb{I}^{k-} = \{i|e^k \in \mathbb{R}_i ~\&~ z_i^k = 0\}$.

Intuitively, it means that we only use a component classifier's recognition result when 1) it works in its reliable range; 2) the result satisfies high PPV or NPV thresholds. In Section~\ref{sec:req}, we introduce the predictive value requirements for the component classifiers.

\subsection{System Requirement for the Predictive Value}\label{sec:req}
Here we put forward a predictive value requirement for each component classifier to have correct MAP estimations assuming  there do not exist  false positives or false negatives from observations. 

To simplify our notation, we define the prior probability of object $\pi_j \triangleq P(O = o_j), j = (1,2, ..., N_o)$ and the prior probability of attribute $F_i$ being positive as
$w_i \triangleq \sum_{\{t|f_{it = 1\}}}\pi_t, (i = 1, 2, ..., M)$. For each attribute,
the following ratios  are calculated: $r^+_i \triangleq \max(1,\frac{\max_{\{t|f_{it} = 0 \}} \pi_t}{\min_{\{t|f_{it} = 1\}} \pi_t})$,
$r^-_i \triangleq \max(1,\frac{\max_{\{t|f_{it} = 1 \}} \pi_t}{\min_{\{t|f_{it} = 0\}} \pi_t})$.
$\mathbb{I}^+_{F_j}$ and $\mathbb{I}^-_{F_j}$ are the index sets of positive and negative attributes in $F^j$, and the  reliably recognized attributes' indexes at the $k$-th observation are denoted as 
$\mathbb{I} = \{\mathbb{I}^1, \mathbb{I}^2, ..., \mathbb{I}^K\}$ ($\mathbb{I}^k$ as defined in section~\ref{sec:inference}). We next state the conditions  for correct MAP estimation.

\begin{theorem}
\label{Correct_result}
If the currently recognized attributes $\bigcup_{k} \mathbb{I}^k$ can uniquely identify object $o_j$,
i.e.  $\bigcup_{k} \mathbb{I}^{k+} \subseteq \mathbb{I}_{F^+_j}$, $\bigcup_{k} \mathbb{I}^{k-} \subseteq \mathbb{I}_{F^-_j}$, $\forall t \neq j, \bigcup_{k} \mathbb{I}^{k+} \nsubseteq \mathbb{I}_{F^+_t}$ or
$\bigcup_{k} \mathbb{I}^{k-} \nsubseteq \mathbb{I}_{F^-_t}$, and if  $\forall i \in \{1, 2, ..., M\}$ the classifiers' predictive values satisfy $p^+_i \geq \frac{r^+_i w_i}{1 + (r^+_i - 1)w_i}$ and $p^-_i \geq \frac{r^-_i(1- w_i)}{w_i +r^-_i (1 - w_i)}$, then the 
  MAP estimation result $\mathbb{A} = \{o_j\}$.
\end{theorem}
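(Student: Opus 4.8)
The plan is to show that $o_j$ maximizes the posterior in~(\ref{eq:ap2}) by comparing it pairwise against every competitor $o_t$, $t\neq j$, through the ratio $P(O=o_j\mid\mathbb{Z}^K,\mathbb{E}^K)/P(O=o_t\mid\mathbb{Z}^K,\mathbb{E}^K)$. The normalizing constant $\lambda$ and every factor coming from an attribute on which $o_t$ agrees with the observations cancel, because for such an index the conditional $P(F_i=f_{it}\mid Z_i^k,E^k)$ and the denominator $\sum_{s:f_{is}=f_{it}}\pi_s$ coincide with the corresponding quantities for $o_j$. Hence the ratio collapses to $\frac{\pi_j}{\pi_t}$ times a product over the observation/attribute pairs where $o_t$ disagrees with a recognized attribute. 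First I would substitute~(\ref{eq:CDP}) to write each surviving factor explicitly: a positive mismatch (a recognized index $i\in\bigcup_k\mathbb{I}^{k+}$ with $f_{it}=0$) contributes $\frac{p_i^+(1-w_i)}{(1-p_i^+)w_i}$, while a negative mismatch ($i\in\bigcup_k\mathbb{I}^{k-}$ with $f_{it}=1$) contributes $\frac{p_i^- w_i}{(1-p_i^-)(1-w_i)}$, using $\sum_{s:f_{is}=1}\pi_s=w_i$ and $\sum_{s:f_{is}=0}\pi_s=1-w_i$.

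The second step is a direct algebraic check that the two predictive-value hypotheses are exactly the conditions making these mismatch factors large. Rearranging $p_i^+\ge \frac{r_i^+ w_i}{1+(r_i^+-1)w_i}$ gives $p_i^+[1+(r_i^+-1)w_i]\ge r_i^+ w_i$, which is equivalent to $\frac{p_i^+(1-w_i)}{(1-p_i^+)w_i}\ge r_i^+$; symmetrically the NPV hypothesis is equivalent to $\frac{p_i^- w_i}{(1-p_i^-)(1-w_i)}\ge r_i^-$. Thus every positive-mismatch factor is at least $r_i^+$ and every negative-mismatch factor at least $r_i^-$, and since each $r_i^\pm\ge 1$ by definition, every surviving factor is at least $1$.

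The final and most delicate step is to absorb the possibly unfavorable prior ratio $\frac{\pi_j}{\pi_t}$. By the uniqueness hypothesis, for each $t\neq j$ there is at least one mismatched index $i_0$; suppose it is a positive mismatch (the negative case is symmetric). Then $f_{i_0 j}=1$ and $f_{i_0 t}=0$, so by the definition of $r_{i_0}^+$ we have $r_{i_0}^+\ge \frac{\max_{s:f_{i_0 s}=0}\pi_s}{\min_{s:f_{i_0 s}=1}\pi_s}\ge \frac{\pi_t}{\pi_j}$, the last step because the numerator dominates $\pi_t$ and the denominator is dominated by $\pi_j$. Bounding the $i_0$-factor below by $r_{i_0}^+\ge\pi_t/\pi_j$ and every remaining mismatch factor below by $1$ yields $\frac{\pi_j}{\pi_t}\cdot\frac{\pi_t}{\pi_j}=1$, so $P(O=o_j\mid\cdot)\ge P(O=o_t\mid\cdot)$ for all $t\neq j$, identifying $o_j$ as a MAP solution. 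The main obstacle is precisely this cancellation: one must pit the single guaranteed mismatch against exactly the prior ratio it has to defeat, which is what the $\max(1,\cdot)$ in the definition of $r_i^\pm$ secures. To upgrade ``a maximizer'' to the singleton $\mathbb{A}=\{o_j\}$, I would observe that the chain of inequalities is strict whenever a classifier clears its predictive-value threshold strictly, or whenever more than one attribute separates $o_t$ from $o_j$; the knife-edge case is naturally excluded by reading the predictive-value bounds as strict, which is the intended regime for high-PPV/NPV classifiers.
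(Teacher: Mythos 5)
Your proposal is correct and follows essentially the same route as the paper's proof: a pairwise comparison of posteriors in which the matched-attribute factors cancel, each mismatch factor is bounded below via the PPV/NPV hypotheses (your inequality $\frac{p_i^+(1-w_i)}{(1-p_i^+)w_i}\ge r_i^+$ is exactly the paper's $\pi_j\frac{p_i^+}{w_i}\ge \pi_g\frac{1-p_i^+}{1-w_i}$ combined with $r_i^+\ge\pi_g/\pi_j$), and the single guaranteed mismatch from the uniqueness hypothesis absorbs the prior ratio. You are in fact slightly more careful than the paper on the final step, since the paper asserts the strict inequality needed for $\mathbb{A}=\{o_j\}$ to be a singleton without flagging the knife-edge equality case that you explicitly address.
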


This requirement  means that if 1) the  attributes can differentiate an object  from others, and 2) the component classifiers' predictive values satisfy the requirement, then for the correct observation input, the system is guaranteed to have a correct recognition result.

\begin{proof}
Based on (\ref{eq:ap2}) and the definition above, the posterior probability of $o_j$ is,
\begin{equation} \label{eq:map_set}
P(O = o_j |\mathbb{Z}^K, \mathbb{E}^K) = \lambda \pi_j \prod_{k=1}^K \bigg( \prod_{i \in \mathbb{I}^{k+}}\frac{p^+_i}{w_i} \prod_{i \in \mathbb{I}^{k-}}\frac{p^-_i}{1 - w_i} \bigg).
\end{equation}

Because the current observed attributes $\bigcup_{k} \mathbb{I}^k$ can uniquely identify $o_j$,  we will have $\forall o_g \in \mathbb{O} / \{o_j\}$, $\exists \mathbb{I}_g \subseteq \bigcup_{k} \mathbb{I}^k $ and $\mathbb{I}_g \neq \emptyset$,
s.t. $\forall i \in \mathbb{I}_g, f_{gi} = 0$ if $i \in \mathbb{I}^{k+}$ or $f_{gi} = 1$ if $i \in \mathbb{I}^{k-}$.
Thus, $\forall o_g \in \mathbb{O} / \{o_j\}$,
\begin{align} \label{eq:complement_set}
\begin{split}
& P(O = o_g |\mathbb{Z}^K, \mathbb{E}^K) = \lambda \pi_g \prod_{k=1}^K
\bigg(\prod_{i \in \mathbb{I}^{k+} /\mathbb{I}_g}\frac{p^+_i}{w_i}
 \prod_{i \in \mathbb{I}^{k+} \bigcap \mathbb{I}_g}\frac{1 - p^+_i}{1 - w_i} \\
&~
 \prod_{i \in \mathbb{I}^{k-} /\mathbb{I}_g}\frac{p^-_i}{1 - w_i}
\prod_{i \in \mathbb{I}^{k-} \bigcap \mathbb{I}_g}\frac{1 - p^-_i}{w_i}\bigg).
\end{split}
\end{align}

Since for each classifier, $p^+_i \geq \frac{r^+_i w_i}{1 + (r^+_i - 1)w_i}$ and
$r^+_i = \max(1, \frac{\max_{\{t|f_{it} = 0 \}} \pi_t}{\min_{\{t|f_{it} = 1\}} \pi_t})$, we have
$\pi_j \frac{p^+_i}{w_i}  \geq \pi_g \frac{1- p^+_i}{1 - w_j}$
and $\frac{p^+_i}{w_i} \geq 1 \geq \frac{1- p^+_i}{1 - w_i}$.
For similar reasons, we have $\pi_j \frac{p^-_i}{1 - w_i}  \geq \pi_g \frac{1- p^-_i}{ w_j}$
and $\frac{p^-_i}{1 - w_i} \geq 1 \geq \frac{1 - p^-_i}{w_i}$.
Also since $\mathbb{I}_g \neq \emptyset$,
we can have (\ref{eq:map_set}) $>$ (\ref{eq:complement_set}), an thus  the conclusion is reached.
\end{proof}

From the proof, we can extend the result to a more general case: if the currently recognized attributes cannot uniquely determine an object,
i.e. there exists a non-empty set $\mathbb{O}' = \{o_j |o_j \in \mathbb{O}, \mathbb{I}_{F^+_j} \supseteq \bigcup_{k}\mathbb{I}^{k+}~\&~ \mathbb{I}_{F^-_j} \supseteq \bigcup_{k}\mathbb{I}^{k-} \}$, the final recognition result
$\mathbb{A} = \underset{o_j \in \mathbb{O}'}{\operatorname{argmax}} ~\pi_j$.
Furthermore, if an equal prior probability is assumed, then $\mathbb{A} = \mathbb{O}'$.

Theorem \ref{Correct_result} proves the system's correctness under correct observations. Next,  for the general case section~\ref{sec:asymp}  proves that MAP estimation asymptotically converges to the actual result under certain assumptions.

\subsection{Asymptotic Correctness of the MAP Estimation}\label{sec:asymp}
Now we are going to prove that MAP estimation will converge to the correct result when 1) the attribute classifiers' PPV and NPV are high enough in their reliable working region, where a lower bound of detection rate exists, and 2) the inputs are sampled randomly.

Denote $d_i$ as the detection rate and $q_i$ as the false-positive rate of the $i$-th attribute classifier when applying the high PPV threshold in its reliable working region. Similarly, for the high NPV threshold, $s_i$ denotes  the true negative rate and  $v_i$ denotes the false negative rate.

\begin{theorem}
\label{asymptotic_correct}
 We assume that the inputs are sampled sufficiently random such  that each attribute classifier gets the same chance to work in its reliable region where a lower bound exists for its detection rate, $0 < A < d_i  \leq 1$ and all the objects have different positive attributes,
 i.e. $\forall i,j, ~i \neq j ~ s.t.~ \mathbb{I}_{F^+_i} \nsubseteq \mathbb{I}_{F^+_j}$. If the component classifiers' predictive values $p^+_i$ and $p^-_i$ are high enough,
 MAP estimation will converge to the correct result asymptotically with an increasing number of observations.
\end{theorem}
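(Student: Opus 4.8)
The plan is to show that, almost surely, the MAP estimate equals the true object $o_j$ for all sufficiently large $K$. Since $\mathbb{O}$ is finite, it suffices to prove that for every competitor $o_g \neq o_j$ the log-posterior-ratio
\begin{equation*}
L^K_{jg} \;\triangleq\; \log\frac{P(O=o_j\mid \mathbb{Z}^K,\mathbb{E}^K)}{P(O=o_g\mid \mathbb{Z}^K,\mathbb{E}^K)} \;=\; \log\frac{\pi_j}{\pi_g} + \sum_{k=1}^K X^{jg}_k
\end{equation*}
diverges to $+\infty$, and then to take a union over the finitely many $g$. Here $X^{jg}_k$ collects the log-factors of the $k$-th observation obtained from (\ref{eq:ap2}); the normalizer $\lambda$ and the prior denominators are object-independent and cancel. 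The first key observation is that any attribute $i$ with $f_{ij}=f_{ig}$ contributes identical factors to both posteriors and cancels in the ratio, so only the \emph{distinguishing} attributes (those with $f_{ij}\neq f_{ig}$) survive in $X^{jg}_k$.

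Next I would use the randomness assumption to model the sequence of (observation, environment) pairs as i.i.d.\ (or at least ergodic), so that the $X^{jg}_k$ are i.i.d.\ with finite mean $\mu_{jg}$ (each takes finitely many bounded values). By the strong law of large numbers $L^K_{jg}/K \to \mu_{jg}$ almost surely, so it remains to prove $\mu_{jg}>0$. The positive drift is supplied by a single distinguishing attribute: the hypothesis $\mathbb{I}_{F^+_j}\nsubseteq\mathbb{I}_{F^+_g}$ guarantees an attribute $a$ with $f_{aj}=1$ and $f_{ag}=0$. Whenever classifier $a$ operates in its reliable region---which happens with some positive frequency $\rho_a$---and fires correctly, which occurs with rate $d_a>A$, it adds $\log\frac{p^+_a(1-w_a)}{(1-p^+_a)w_a}$ to $X^{jg}_k$, a quantity that grows like $\log\tfrac{1}{1-p^+_a}$ and is the engine pushing $L^K_{jg}$ upward.

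The main obstacle is to control the contributions that favor $o_g$ so that $\mu_{jg}$ is genuinely positive. Exactly two kinds of observations help $o_g$: a false negative on $a$ (rate $v_a$) and a false positive on any attribute $b$ with $f_{bj}=0$, $f_{bg}=1$ (rate $q_b$); the associated log-terms behave like $\log(1-p^-_a)$ and $\log(1-p^+_b)$. The crucial step is to express these error rates through the predictive values by Bayes' rule, $q_i=\tfrac{1-p^+_i}{p^+_i}\tfrac{d_i w_i}{1-w_i}$ and $v_i=\tfrac{1-p^-_i}{p^-_i}\tfrac{s_i(1-w_i)}{w_i}$, and to note that a product of the form $\varepsilon\log\tfrac{1}{\varepsilon}$, with $\varepsilon=1-p^+_i$ or $1-p^-_i$, tends to $0$ as the predictive value tends to $1$. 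Hence every adverse contribution vanishes in the high-predictive-value limit, the true-negative contributions are nonnegative, and the remaining distinguishing attributes contribute terms bounded below by fixed constants; consequently $\mu_{jg}\ge \rho_a d_a\log\tfrac{1}{1-p^+_a}-C\to+\infty$. Thus there is a threshold on the predictive values---uniform over the finitely many pairs $(j,g)$ because there are finitely many---beyond which every $\mu_{jg}>0$. Combining this with the SLLN gives $L^K_{jg}\to+\infty$ almost surely for each $g\neq j$, and the union bound yields $\mathbb{A}=\{o_j\}$ for all large $K$, the asserted asymptotic correctness.
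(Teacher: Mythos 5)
Your proof is correct and rests on the same engine as the paper's: the drift from a distinguishing positive attribute grows like $d_a\log\frac{p^+_a}{1-p^+_a}\to\infty$, while every adverse term is of the form $\varepsilon\log\frac{1}{\varepsilon}$ with $\varepsilon=1-p^+_i$ or $1-p^-_i$ (equivalently, the paper's $\lim_{p\to1}(\frac{p}{1-p})^{1-p}=1$), so it vanishes as the predictive values approach $1$; both arguments also control $q_i$ and $v_i$ via Bayes' rule, you with the exact identities and the paper with the cruder bounds $q_i\le\frac{1-p^+_i}{1-w_i}$, $v_i\le\frac{1-p^-_i}{w_i}$. The structural difference is in the reduction: the paper asserts that the worst case is two objects with complementary positive-attribute sets and carries out the product-ratio computation only for that configuration, whereas you compare the true object against every competitor directly, note that shared attributes cancel in the log-ratio, establish a positive mean drift $\mu_{jg}>0$ for each pair, and finish with the SLLN and a union bound over the finitely many competitors. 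Your route is the more complete one --- it actually supplies the justification that the paper's worst-case reduction leaves implicit, and it invokes the correct limit theorem (the paper cites the central limit theorem where the law of large numbers is what is being used). The paper's version buys a shorter, more explicit computation at the cost of that unproved reduction; yours buys uniformity over all pairs and a cleaner probabilistic foundation. One small caveat: your i.i.d.\ modeling of the environment sequence is an interpretation of the paper's informal ``sampled sufficiently random'' hypothesis, so you should state it as an explicit assumption, as the paper effectively does when it fixes the common count $n$ of reliable-region visits per classifier.
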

\begin{proof}
Consider the worst case, where only two candidates  $\mathbb{O} = \{o_1, o_2\}$ exist. Without loss of generality, assume $o_1$ has positive attributes $\mathbb{I}_{F^+_1} = \{1, 2, ... ,M_1\}$ and $o_2$ has all the remaining positive attribute $\mathbb{I}_{F^+_2} = \{M_{1+1},M_{1+2}, ..., M\}$, where
$M_1 \geq 1$. Also assume $o_1$ is the ground truth object.
In this case all the false-positive and false-negatives  will drive the estimation  toward $o_2$.

Based on (\ref{eq:ap2}), the posterior probability distributions of $o_1$ and $o_2$ can be written as:
\begin{equation}\label{eq:o1}
P(O = o_1 |\mathbb{Z}^K, \mathbb{E}^K) = \lambda \pi_1 \prod_{i=1}^{M_1}(\frac{p^+_i}{w_i})^{n^+_i}
(\frac{1 - p^-_i}{w_i})^{n^-_i}
 \prod_{i = M_1 + 1}^M(\frac{1 - p^+_i}{1 - w_i})^{n^+_i}
(\frac{p^-_i}{1 - w_i})^{n^-_i}
\end{equation}
\begin{equation}\label{eq:o2}
P(O = o_2 |\mathbb{Z}^K, \mathbb{E}^K) = \lambda \pi_2 \prod_{i=1}^{M_1} (\frac{1 - p^+_i}{1 - w_i})^{n^+_i}
(\frac{p^-_i}{1-w_i})^{n^-_i}
 \prod_{i = M_1+1}^M(\frac{p^+_i}{w_i})^{n^+_i}
 (\frac{1-p_i^-}{w_i})^{n_i^-}, 
 \end{equation}
where $n^+_i$ and $n^-_i$ are the number of positive and negative recognition results of the  $i$-th attribute. Denote $n$ as the number of times the  $i$-th classifier works in its reliable region $\mathbb{E}^i$. Based on the central limit theorem, we have $P(n^+_i > n \frac{d_i}{\alpha}) = 1$ and $P(n^-_i < n\alpha v_i) = 1$ for $i = 1,2,...,M_1$ when $n$ goes to infinity and $\alpha$ can be any positive constant larger than $1$.

For the same reason, we have $P(n^+_i < n \alpha q_i) = 1$ for $i = M_1+1, ...,M$ when $n$ goes to infinity. We use the same $n$ here assuming same likelihood of  reliable working regions for each classifier. Actually it does not matter if there is a constant positive factor on $n$, which means that the chances for the classifiers' reliably working region may be  proportional.

Dividing (\ref{eq:o1}) by  (\ref{eq:o2}), we obtain:
\begin{align}\label{compare}
\begin{split}
& \frac{P(O = o_1 |\mathbb{Z}^K, \mathbb{E}^K)}{P(O = o_2 |\mathbb{Z}^K, \mathbb{E}^K)} = 
\frac{\pi_1}{\pi_2}
\frac{ \prod_{i=1}^{M_1}(\frac{p^+_i / w_i}{(1 - p^+_i) / (1 - w_i)})^{n^+_i}
(\frac{(1-p^-_i) / (w_i)}{p^-_i / (1-w_i)})^{n_i^-}
}
{\prod_{i=M_1+1}^M  (\frac{p^+_i / w_i}{(1 - p^+i) / (1 - w_i)})^{n^+_i}
(\frac{(1-p^-_i) / w_i}{p_i^- / (1-w_i)})^{n_i^-}
}
\\
& \geq \frac{\pi_1}{\pi_2} \frac{ \prod_{i=1}^{M_1}(\frac{p^+_i / w_i}{(1 - p^+_i) / (1 - w_i)})^{n \frac{d_i}{\alpha}}
(\frac{(1-p^-_i) / (w_i)}{p^-_i / (1-w_i)})^{n\alpha v_i}
}
{\prod_{i=M_1 + 1}^M  (\frac{p^+_i / w_i}{(1 - p^+i) / (1 - w_i)})^{n\alpha q_i}} \\
& \text{~~~~~($p^+_i$, $p_i^-$ larger than the threshold in theorem \ref{Correct_result})}
\\
& = c_1 \bigg(c_2  \frac{\prod_{i=1}^{M_1}(\frac{p^+_i}{1 - p^+_i})^{\frac{d_1}{\alpha}}
(\frac{1-p^-_i}{p_i^-})^{\alpha v_i}
}
{\prod_{i = M_1+1}^M (\frac{p^+_i}{1-p^+_i})^{\alpha q_i} }\bigg)^n
\geq c_1 \bigg(c_2 \frac{\prod_{i=1}^{M_1} (\frac{p^+_i}{1 - p^+_i})^{\frac{A}{\alpha}}
(\frac{1-p^-_i}{p_i^-})^{\alpha \frac{1-p^-_i}{w_i} }
}
{ \prod_{i = M_1+1}^M (\frac{p^+_i}{1-p^+_i})^{\alpha \frac{(1-p^+_i)}{1-w_i} } }  \bigg)^n \\
& \text{~~~~(for the upper bound of $q_i$ and $v_i$ see (Eq.~\ref{eq:up_qv}))} 
\end{split} 
\end{align}
Because $\underset{p \to 1}{\lim}\frac{p}{1 - p} = \infty$ and $\underset{p\to 1}{\lim}(\frac{p}{1 - p})^{1-p} = 1$, the division will be larger than $1$ when the predictive value of each classifier is high enough, which means the MAP estimation will yield $o_1$ asymptotically.

The proof of upper bound of $q_i$ and $v_i$:
\begin{align} \label{eq:up_qv}
q_i = P(Z_i = 1 | F_i = 0) = \frac{P(Z_i = 1)(1-p^+_i)}{1 - w_i} \leq \frac{1-p^+_i}{1 - w_i}
\\
v_i = P(Z_i = 0 | F_i = 1) = \frac{P(Z_i = 0)(1-p^-_i)}{ w_i} \leq \frac{1-p^-_i}{ w_i}
\end{align}
\end{proof}

Beyond providing theoretical background, in the next section we perform  experiments on a real object recognition task to first demonstrate the influence of the environment, and then  to validate our framework's performance.

\section{Experiments}
In this section, we demonstrate our framework on the   task of recognizing objects on a table top. 
We first build a pipeline to collect our own data\footnote{The dataset is available from \href{http://ece.umd.edu/~wluan/ECCV2016.html}{\url{http://ece.umd.edu/~wluan/ECCV2016.html}}
}. The reason for collecting our own data is that other available RGBD datasets \cite{RGBD},\cite{BigBIRD} 
focus on different aspects, usually pose or multiview recognition, and do not contain a sufficient amount of samples from varying  observation distances.

Three experiments are conducted to show 1) the necessity of incorporating environmental factors (the recognition distance in our case) for object recognition; 2) the performance of the high predictive value threshold classifier in comparison to the  single threshold one;  and 3) the benefits of incorporating less discriminative attributes for extending the working range of classifiers.

\subsection{Experimental Settings}
The preprocessing pipeline is illustrated in Fig.~\ref{preprocess}. After a point cloud is grabbed from a 3D camera such as Kinect or Xtion PRO LIVE, we first apply a passthrough filter to remove points that are too close or too far away from the camera. Then the table surface is located by matching the point could to a 3D plane model using  random sample consensus (RANSAC), and only points above the table are kept. Finally, on the remaining points, Euclidean clustering is employed to generate object candidates, and  point clouds with  less than $600$ points are discarded.
\begin{figure}
    \centering
        \subfloat[]{\includegraphics[scale = 0.24]{./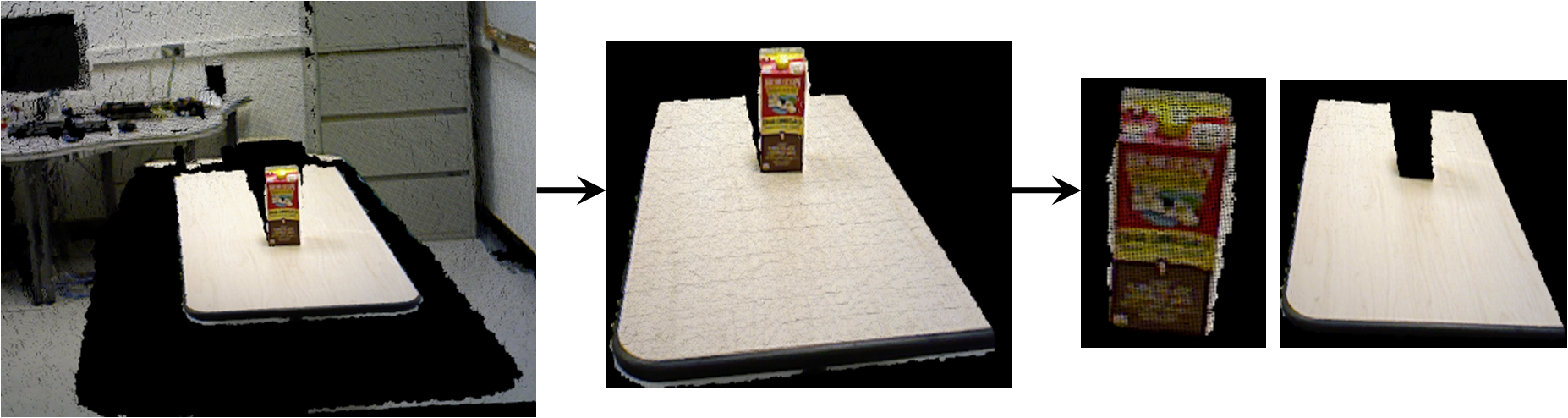} \label{preprocess}} ~
        \subfloat[]{\includegraphics[width = 4.0cm, height=2.4cm]{./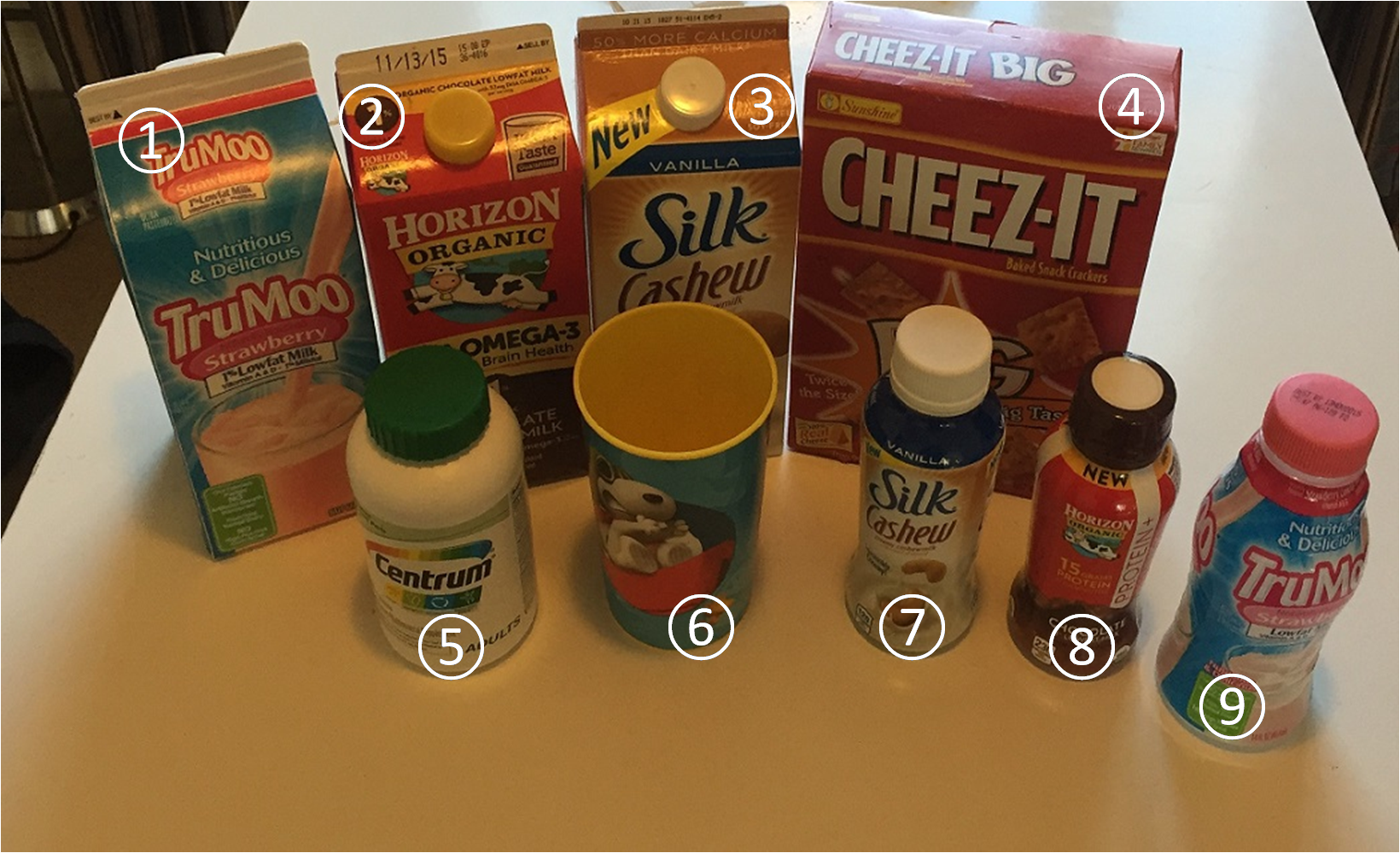}  \label{objects}}
        \caption{(a) Illustration of the preprocessing pipeline. Left: input; Middle: point cloud after passthrough filtering; Right: segmented candidate object and removed table surface. (b) The objects we use in the task and their IDs. } 
\end{figure}

For the segmented point clouds, three categories of classifiers are applied, which are tuned to  attributes of fine shape, coarse shape, and color.


Fine shape is recognized by the Viewpoint Feature Histogram (VFH) descriptor, which encodes a point cloud into a $308$ dimensional vector. Radu \cite{VFH} provides a pipeline of computing VFH features and retrieving the minimum feature distance matching  by fast approximate K-Nearest Neighbors, implemented in the 
Fast Library for Approximate Nearest Neighbors (FLANN) \cite{flann}. However, this approach tends to generate false positives when matching different point clouds 
with very different  distances to the camera. Thus, we adapt the original recognition pipeline to a two step matching. We first pick up model point clouds from our database with  similar distance to test input point cloud. Among the nearby template point clouds, we use the minimum VFH feature matching distance as the classification score. Both steps use FLANN to accelerate neighbor retrieval, where  the former step uses the Euclidean distance and the latter  the Chi-Square distance.

As another type of attribute, we use coarse shape, which is less selective than the fine shape attribute. Our experiments later on demonstrate  its advantage of having a larger working region, thence it can help increase the system's recognition accuracy over a broader range of distance. Two coarse shapes,   cylinders and planar surfaces, are recognized by fitting a cylindrical and a plane model, whose coefficients are estimated by RANSAC. The percentage of outlying points is counted as the classification score for  the  shape. Thus, a  lower score indicates better coarse attribute fitting in our experiment.

The last type of attribute in our system is color, which is used to augment the system's recognition capability. To control the influence of illumination, all samples are collected under one stable lighting condition.
The color histogram is calculated on point clouds after Euclidean clustering, where few background or irrelevant pixels are involved. The Hue and Saturation channels of color are discretized into $30$ bins $(5 \times 6)$, which works well for  differentiating the major colors.

As shown in Fig.~\ref{objects}, there are  $9$ candidate objects in our dataset. To recognize them, we use $5$ fine shape attributes: shape of cup, bottle, gable top carton, wide mouse bottle, and box; $2$ coarse shape attributes: cylinder and plane surface; $3$ major colors: red, blue and yellow. The attributes for all objects are  listed in Table~\ref{tbl:attr}. In the following experiments, we fix the pose of objects, and set the recognition distance as the only changing factor.

\begin{table}
\scriptsize
\begin{center}
 \begin{tabular}{|C{1cm}||C{1cm}|C{1cm}|C{1cm}|C{1cm}|C{1cm}|C{1cm}|C{1cm}|C{1cm}|C{1cm}|C{1cm}|}
 \hline
 Object ID&plane surface&cylinder&gable top carton shape&box shape&wide mouth bottle shape&cup shape&bottle shape&red color&blue color&yellow color\\ 
 \hline
$1$ &  \checkmark&-&\checkmark&-&-&-&-&-&\checkmark&-\\
 \hline
$2$ &  \checkmark&-&\checkmark&-&-&-&-&\checkmark&-&-\\
 \hline
$3$ &  \checkmark&-&\checkmark&-&-&-&-&-&-&\checkmark\\
 \hline
$4$ &\checkmark&-&-&\checkmark&-&-&-&\checkmark&-&-\\
 \hline
$5$ &-&\checkmark&-&-&\checkmark&-&-&-&-&-\\
 \hline
$6$ &-&\checkmark&-&-&-&\checkmark&-&-&\checkmark&-\\
 \hline
$7$ &-&\checkmark&-&-&-&-&\checkmark&-&-&\checkmark\\
 \hline
$8$ &-&\checkmark&-&-&-&-&\checkmark&\checkmark&-&-\\
 \hline
$9$&-&\checkmark&-&-&-&-&\checkmark&-&\checkmark&-\\
\hline
\end{tabular}
\caption{Object IDs and their list of attributes}
 \label{tbl:attr}
\end{center}
\end{table}

\subsection{Experimental Results}

{\bf EXPERIMENT ONE: }The first experiment is designed to validate our claim that the classifiers' response score distributions are  indeed distance variant. Therefore, it is necessary to integrate distance in a robust recognition  system.

Taking the fine shape classifier of bottle shapes as  example, we divide the distance range between $60$ cm and  $140$ cm into $4$ equally separated  intervals and collect positive samples (object id $7,8,9$) and negative samples from the remaining $9$ objects in each distance interval. The number of positive samples in each interval is $120$ with $40$ objects from each positive instance, while the number of negative samples is $210$ with $35$ from each instance. The distribution of the bottle classifier's response score is approximated by Gaussian kernel density estimation with a standard deviation of $3$, and plotted in Fig.~\ref{fig:distribution}.
\begin{figure}
\centering
\begin{tabular}{cc}
\subfloat[]{\includegraphics[width=5cm, height = 3.1cm]{./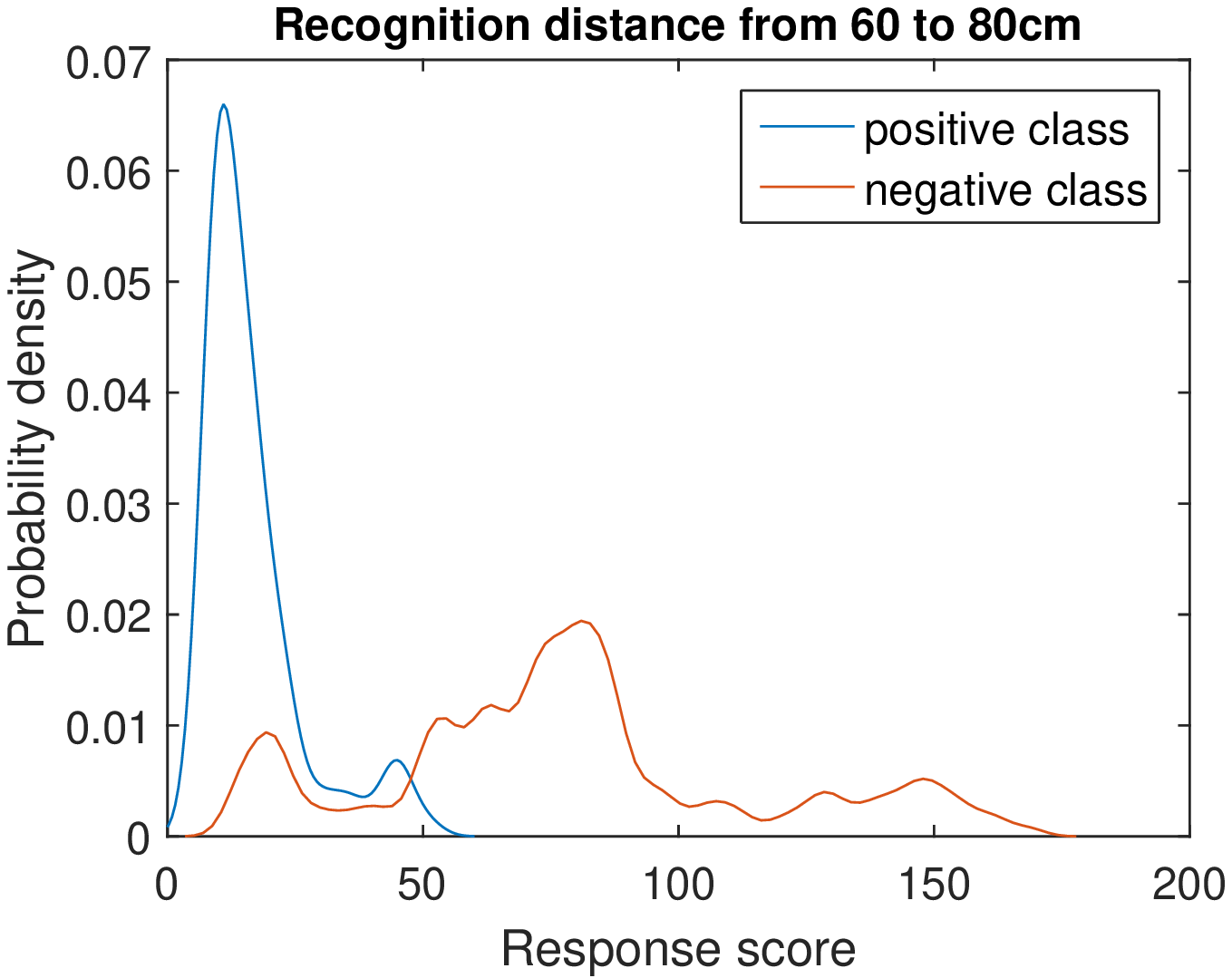}}
   & \subfloat[]{\includegraphics[width=5cm, height = 3.1cm]{./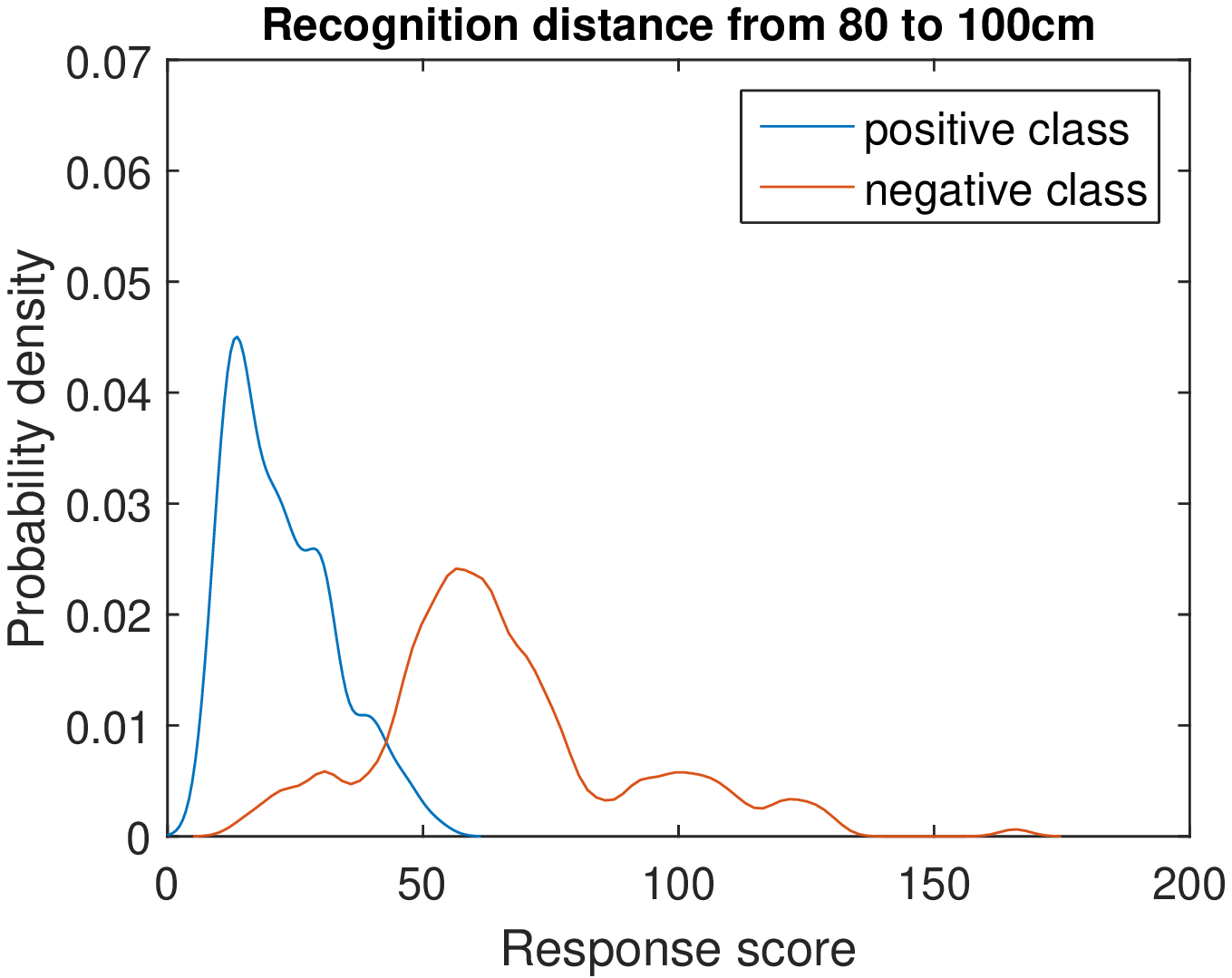}}\\
\subfloat[]{\includegraphics[width=5cm, height = 3.1cm]{./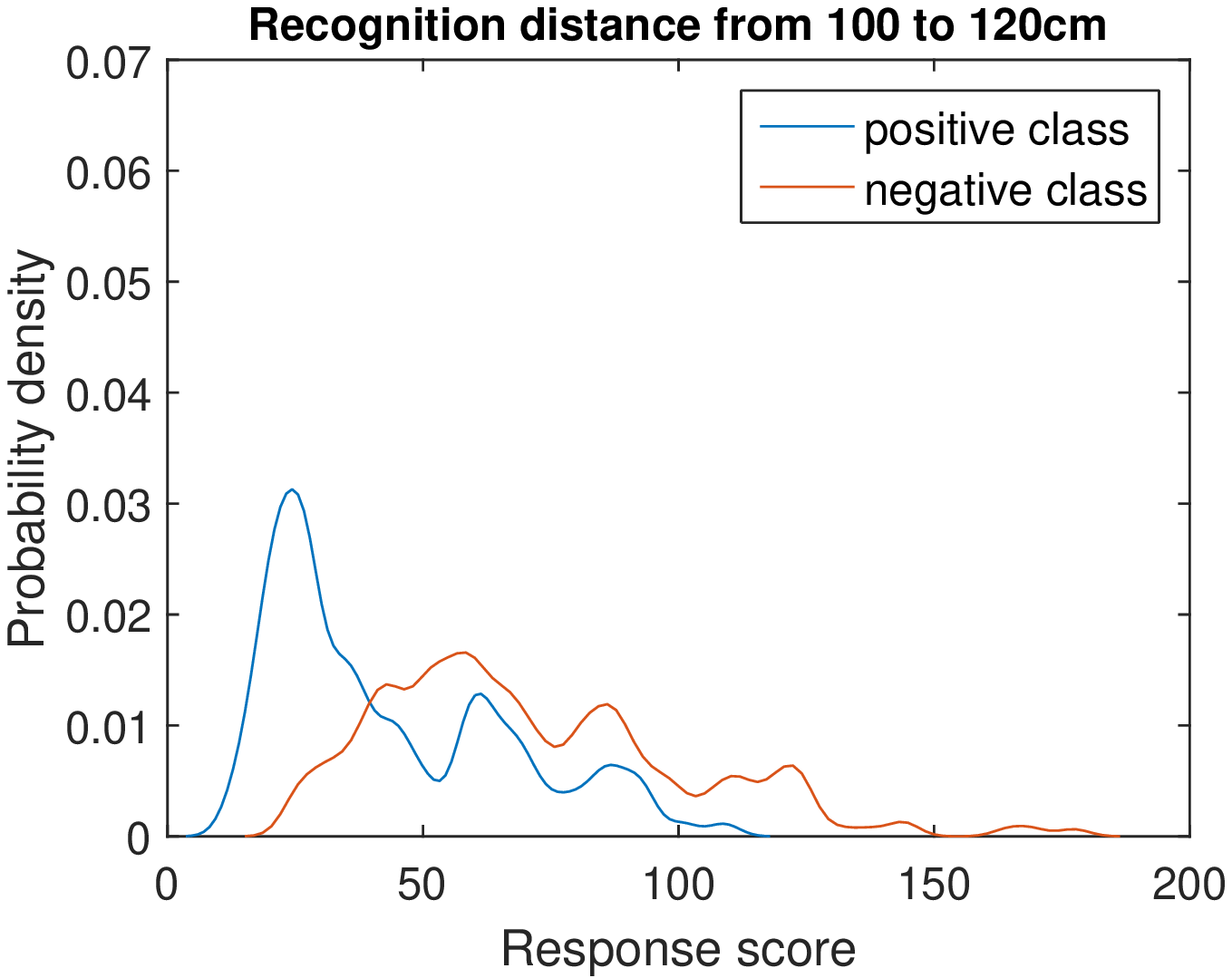}}
   & \subfloat[]{\includegraphics[width=5cm, height = 3.1cm]{./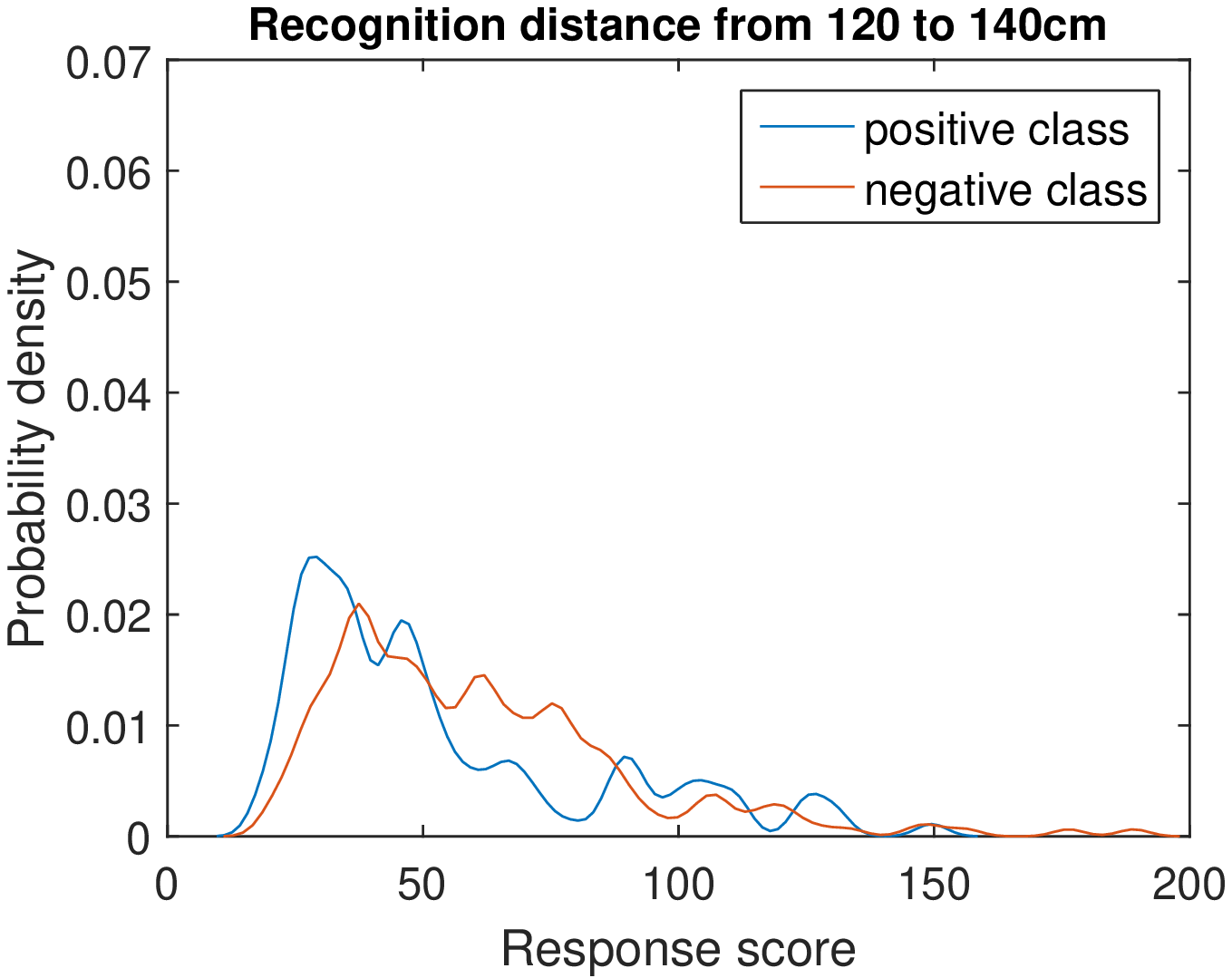}}\\
\end{tabular}
\caption{Estimated distribution of bottle shape classifier's response score  under $4$ recognition distance intervals.}\label{fig:distribution}
\end{figure}

We observe that the output score distribution depends on the recognition distance interval. Therefore, relying on one single classification threshold across all the distance intervals would introduce additional error. More importantly, we observe that with a larger distance, the  area of overlap between   the positive and negative distributions, becomes wider, which makes classification more difficult.

{\bf EXPERIMENT TWO: }Experiment one demonstrated the difficulty of learning a distance-variant ground truth distribution and  corresponding classification thresholds. Therefore, we propose to use two high predicative value thresholds when multiple inputs are available.
The second experiment is designed to validate this idea by comparing the classification accuracy of an estimator that 1) uses  two high predicative value thresholds, with an estimator that uses 2) one optimal Bayes threshold minimizing the error in the training data.

To have  a fair comparison, we set our task as recognizing $5$ objects (id $1,4,5,6,9$ ) with $5$ fine shape attributes such that each object contains one positive attribute that uniquely identifies it. 
Both training and testing point clouds are collected at a distance of $100$ cm to $120$ cm . 
To learn the classification threshold, we sample $26$ point clouds for each object and uniformly select $20$  for  training. The testing data for each object consists of $22$ point clouds that we can randomly choose from to simulate the scenario of an active moving observer  gathering multiple  inputs. 
Here we want to mention a special case. When our framework is uncertain based on the current input, it randomly select (with equal probability) one of the possible objects. The classification accuracy between using a single threshold and using two high predicative value thresholds are shown in Fig.~\ref{fig:error} respectively.



We can see that both methods' error rates decrease when  the number of observations increases. The approach using two thresholds (the red line) has lower error rate than the one using a single threshold (the blue line). The green line shows the error introduced by random selection, when our framework cannot make a sole decision. The major part of the error in the two thresholds method is due to this error. It is worth mentioning that under theoretical conditions, the classical Bayes single threshold should still be the best in minimizing the classification error. Our method provides an alternative for cases when the training data in real world scenarios does not represent  well the underlying distribution. 
\begin{figure}
\centering
\begin{tabular}{cc}
\subfloat[]{
 \includegraphics[width=5.3cm, height=3.3cm]{./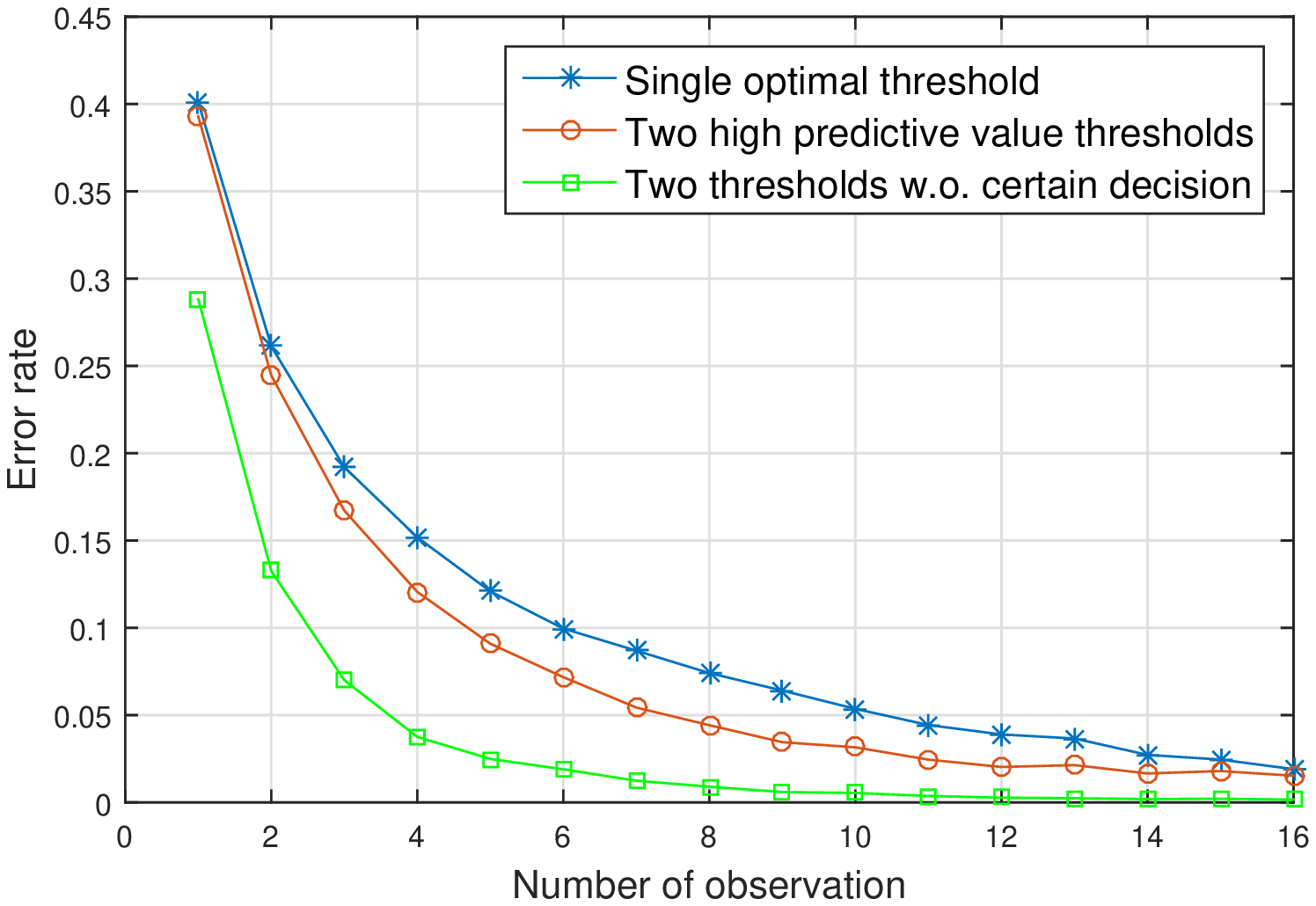}
 \label{fig:error}
}
&
\subfloat[]{
 \includegraphics[width=5.4cm, height=3.3cm]{./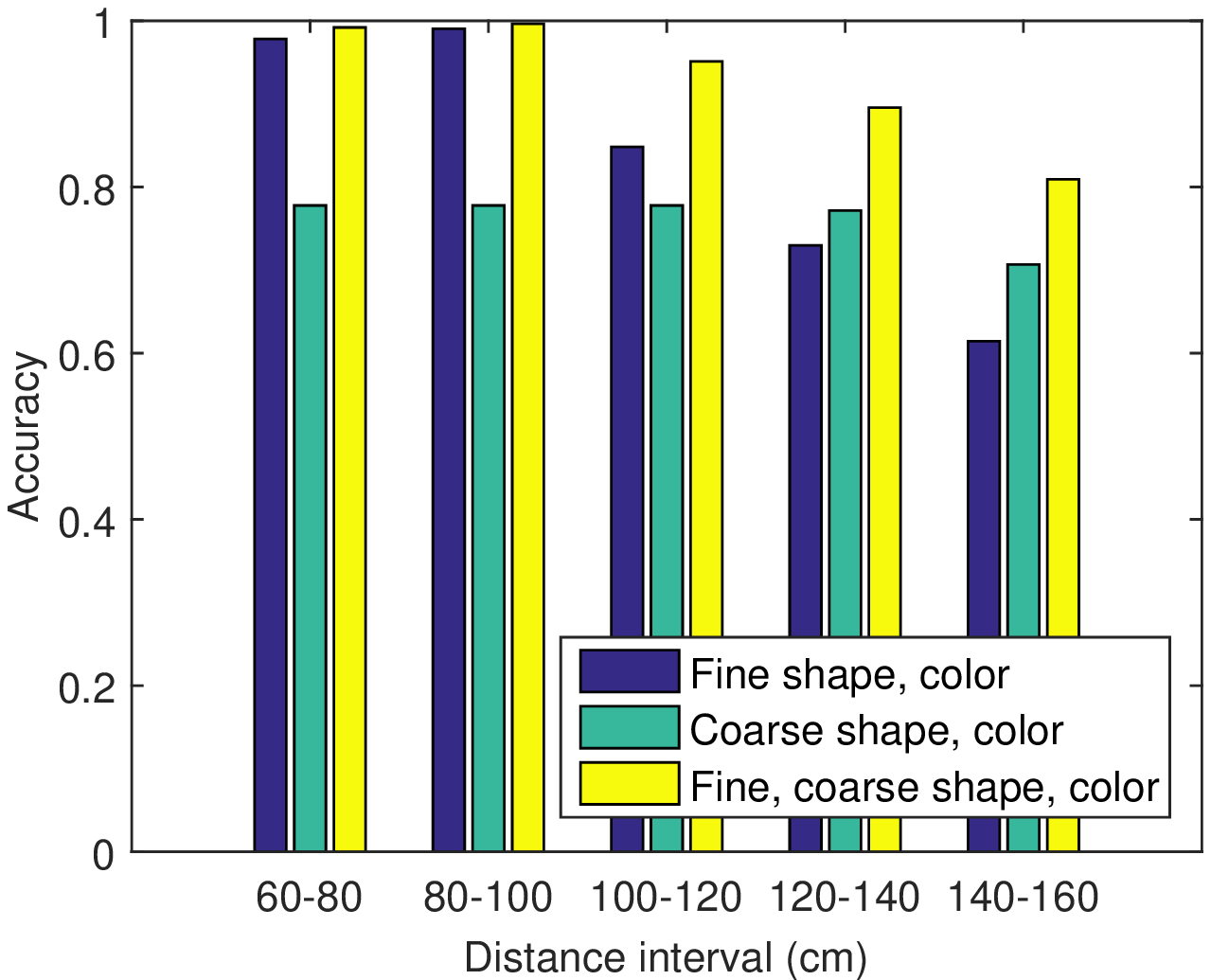}
 \label{fig:system}
}
\end{tabular}
\caption{(a): Error rate using classification with a single threshold (blue) and two high predictive value thresholds (red). The green line depicts the error component due to  the cases where  the two thresholds method has to randomly select. (b) Three systems' recognition accuracy for  different working distance intervals.}
\end{figure}

{\bf EXPERIMENT THREE: }The third experiment demonstrates the benefits of using less discriminative attributes for extending the system's working range. To recognize the $9$ objects
in Fig.~\ref{objects}, we build three recognition systems utilizing attributes of fine shape and  color, coarse shape and color, and all of the three attributes, respectively. Considering the influence of the  recognition distance on the response score distribution, the complete range of  distances from $60$ cm to $160$ cm is split into $5$ equal  intervals. We then learn the  classification thresholds and predictive values accordingly. Both, the training and the testing data, consist of around $100$ samples from each object across recognition distances from $60$ cm to $160$ cm. We learn the PPV and NPV by  counting the training data w.r.t. the thresholds and select thresholds satisfying a predictive value larger than $0.96$. 
The minimum detection rate for the reliable working distance interval is $0.09$. This means if 1) an attribute classifier cannot find a threshold with PPV larger than $0.96$, and 2)  detection rate  larger than $0.09$ in a certain distance interval, the output of this attribute classier in this interval will not be adopted for decision making.
During the testing phase, for fair comparison, we constrain the number of input point clouds collected from the same distance interval in each working region. Around $120$ point clouds are collected for each object. Once more, random selection is  applied when multiple objects are found  as possible candidates.

Fig.~\ref{fig:system} displays the systems' recognition accuracy after observing three times in each distance interval. As expected, the classification performance starts to decrease for  larger distances. At  $120$ cm to $160$ cm, the system using  fine shape attributes (blue)  performs even worse than the system using less selective coarse attributes (green). This validates that the coarse shape based classifier has a larger working region, though  its simple mechanisms allows for less discrimination than the fine grain attribute based classifier. Finally, due to the complementary properties, the system  using all  attributes (yellow) achieves the best performance at each working region.

\section{Conclusions}

In this work we put forward a practical framework for using multiple attributes for object recognition, which incorporates recognition distance into the decision making. Considering the difficulties of finding a single best classification threshold and the availability of multiple inputs at testing time, we propose to learn a high PPV and a high NPV threshold and discard  uncertain values  during decision making. The framework's correctness was  proven and a fundamental experiment was  conducted to demonstrate our approach's feasibility and benefits. Additionally, we showed that  less selective shape attributes (compared to the sophisticated ones) can have advantages,  because  their  simple mechanism can  lead to high reliability when the system is working at a large range of distances. 

In future work, we plan to extend the approach to a variety of environmental factors such as lighting conditions, blur, and occlusions. Furthemore, additional attribute classifiers will be incorporated to improve  the system's recognition performance.

{\bf Acknowledgment: }
This work was funded by the support of  DARPA (through ARO) grant W911NF1410384, by NSF through grants CNS-1544787 and SMA-1540917 and Samsung under the GRO program (N020477, 355022).

%
%

\bibliographystyle{splncs}
\bibliography{0670}
\end{document}